\def\eqref#1{equation~\ref{#1}}
\def\1{\bm{1}}
\def\va{{\bm{a}}}
\def\mH{{\bm{H}}}
\def\mI{{\bm{I}}}
\def\mM{{\bm{M}}}
\def\mV{{\bm{V}}}
\def\mW{{\bm{W}}}
\DeclareMathAlphabet{\mathsfit}{\encodingdefault}{\sfdefault}{m}{sl}
\SetMathAlphabet{\mathsfit}{bold}{\encodingdefault}{\sfdefault}{bx}{n}
\def\sD{{\mathbb{D}}}
\def\sM{{\mathbb{M}}}
\def\sR{{\mathbb{R}}}
\def\sS{{\mathbb{S}}}
\newcommand{\tabincell}[2]{\begin{tabular}{@{}#1@{}}#2\end{tabular}}
\newtheorem{definition}{Definition}
\newtheorem{assumption}{Assumption}
\newtheorem{theorem}{Theorem}
\title{FL-WBC: Enhancing Robustness against Model Poisoning Attacks in Federated Learning from a Client Perspective}
\author{
Jingwei Sun\textsuperscript{\rm 1}, Ang Li\textsuperscript{\rm 1}, Louis DiValentin\textsuperscript{\rm 2}, Amin Hassanzadeh\textsuperscript{\rm 2},\\ \textbf{Yiran Chen\textsuperscript{\rm 1}, Hai Li\textsuperscript{\rm 1}}\\ 
\textsuperscript{\rm 1}Department of Electrical and Computer Engineering, Duke University\\
\textsuperscript{\rm 2}Security R\&D, Accenture Labs, Accenture\\
\textsuperscript{\rm 1}\{jingwei.sun, ang.li630, yiran.chen, hai.li\}@duke.edu,\\ \textsuperscript{\rm 2}\{louis.divalentin, amin.hassanzadeh\}@accenture.com
}
\begin{document}

\maketitle

\begin{abstract}
Federated learning (FL) is a popular distributed learning framework that trains a global model through iterative communications between a central server and edge devices. Recent works have demonstrated that FL is vulnerable to model poisoning attacks. Several server-based defense approaches (e.g. robust aggregation) have been proposed to mitigate such attacks. However, we empirically show that under extremely strong attacks, these defensive methods fail to guarantee the robustness of FL. More importantly, we observe that as long as the global model is polluted, the impact of attacks on the global model will remain in subsequent rounds even if there are no subsequent attacks. In this work, 
we propose a client-based defense, named \textit{White Blood Cell for Federated Learning (FL-WBC)}, which can mitigate model poisoning attacks that have already polluted the global model. The key idea of FL-WBC is to identify the parameter space where long-lasting attack effect on parameters resides and perturb that space during local training. Furthermore, we derive a certified robustness guarantee against model poisoning attacks and a convergence guarantee to FedAvg after applying our FL-WBC.
We conduct experiments on FasionMNIST and CIFAR10 to evaluate the defense against state-of-the-art model poisoning attacks. The results demonstrate that our method can effectively mitigate model poisoning attack impact on the global model within 5 communication rounds with nearly no accuracy drop under both IID and non-IID settings. Our defense is also complementary to existing server-based robust aggregation approaches and can further improve the robustness of FL under extremely strong attacks. Our code can be found at \url{https://github.com/jeremy313/FL-WBC}.

\end{abstract}

\section{Introduction} \label{sec:introduction}

Federated  learning  (FL)~\cite{mcmahan2017communication,he2020fedml} is  a  popular  distributed learning approach that enables a number of edge devices to train a shared model in a federated fashion without transferring their local training data. However, recent works~\cite{mhamdi2018hidden,fang2020local,baruch2019little,mahloujifar2019universal,sun2019can,bagdasaryan2020backdoor,wang2020attack,xie2019dba,bhagoji2019analyzing,tomsett2019model} show that it is easy for edge devices to conduct model poisoning attacks by manipulating local training process to pollute the global model through aggregation. 

Depending on the adversarial goals, model poisoning attacks can be classified as \textit{untargeted model poisoning attacks}~\cite{mhamdi2018hidden,fang2020local,baruch2019little,mahloujifar2019universal}, which aim to make the global model indiscriminately have a high error rate  on any test input, or \textit{targeted model poisoning attacks}~\cite{sun2019can,bagdasaryan2020backdoor,wang2020attack,xie2019dba,bhagoji2019analyzing,tomsett2019model}, where the goal is to make the global model generate attacker-desired misclassifications for some particular test samples. Our work focuses on the \textit{targeted model poisoning attacks} introduced in \cite{bhagoji2019analyzing,tomsett2019model}. 
In this attack, malicious devices share a set of data points with dirty labels, and the adversarial goal is to make the global model output the same dirty labels given this set of data as inputs. Our work  can be easily extended to many other model poisoning attacks (e.g., backdoor attacks), which shall be discussed in \S\ref{sec:attack_analysis}.

Several studies have been done to improve the robustness of FL against model poisoning attacks through robust aggregations~\cite{yin2018byzantine,pillutla2019robust,fu2019attack,siegel1982robust,holland1977robust}, clipping local updates~\cite{sun2019can} and leveraging the noisy perturbation~\cite{sun2019can}. These defensive methods focus on only preventing the global model from being polluted by model poisoning attacks during the aggregation. However, we empirically show that these server-based defenses fail to guarantee the robustness when attacks are extremely strong. More importantly, we observe that as long as the global model is polluted, the impact of attacks on the global model will remain in subsequent rounds even if there are no subsequent attacks, and can not be mitigated by these server-based defenses. Therefore, an additional defense is needed to mitigate the poisoning attacks that cannot be eliminated by robust aggregation and will pollute the global model, which is the goal of this paper.

\begin{wrapfigure}{r}{0.35\textwidth}
 \vspace{-4mm}
\centering
     \includegraphics[scale=0.3]{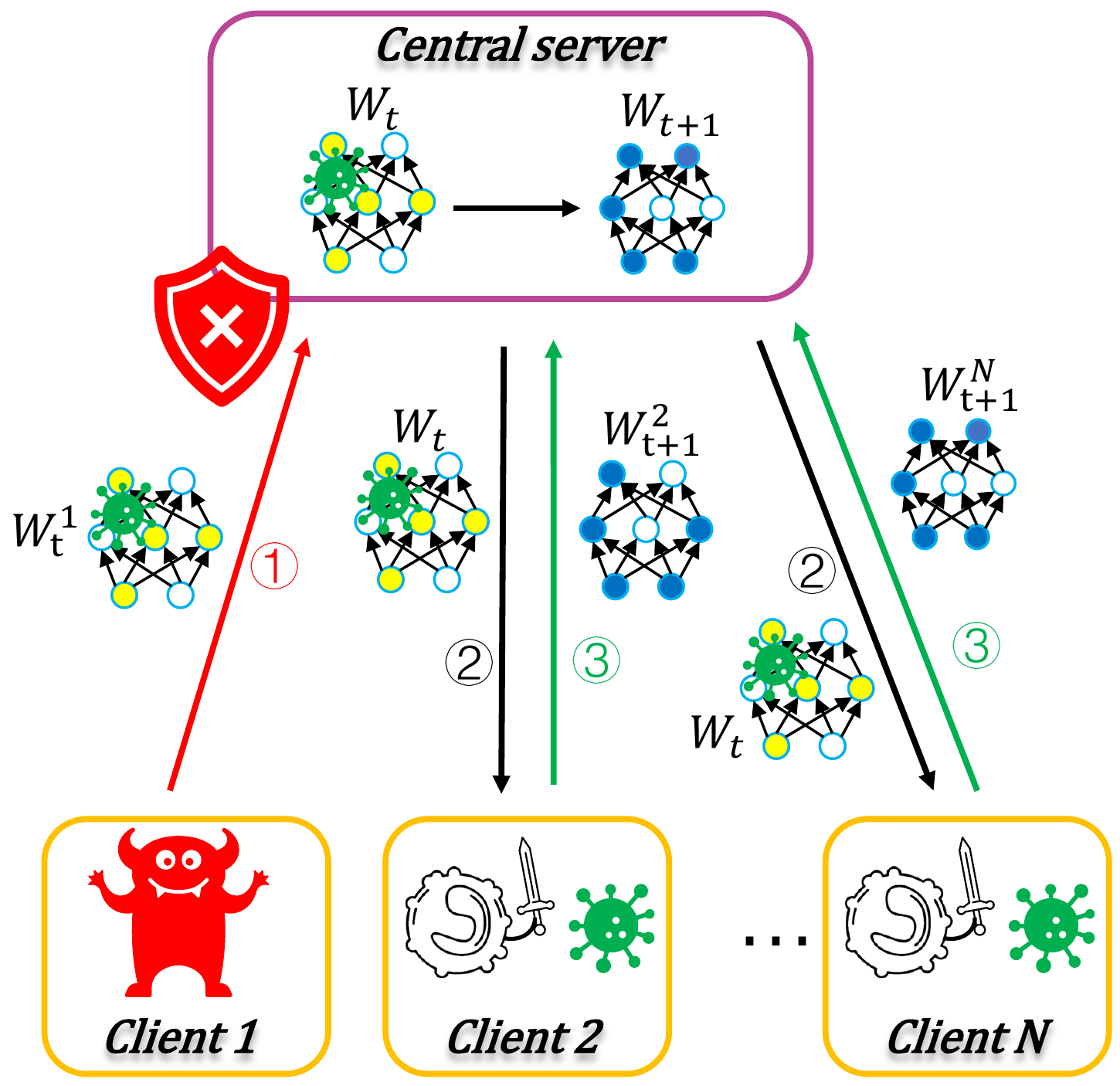}
     \vspace{-2mm}
\caption{Overview of FL-WBC.}
\label{fig:overview}
\vspace{-5mm}
\end{wrapfigure}

To achieve this goal, we first propose a quantitative estimator named \textit{Attack Effect on Parameter (AEP)}. It estimates the effect of model poisoning attacks on global model parameters and infers information about the susceptibility of different instantiations of FL to model poisoning attacks. With our quantitative estimator, we explicitly show the long-lasting attack effect on the global model. Based on our analysis, we design a client-based defense named \textit{White Blood Cell for Federated Learning (FL-WBC)}, as shown in Figure \ref{fig:overview}, which can mitigate the model poisoning attacks that have already polluted the global model. 
FL-WBC differs from previous server-based defenses in mitigating the model poisoning attack that has already broken through the server-based defenses and polluted the global model.
Thus, our client-based defense is complementary to current server-based defense and enhances the robustness of FL against the model poisoning attack, especially against the extremely strong attacks that can not be mitigated during the aggregation. We evaluate our defense on Fashion-MNIST~\cite{xiao2017/online} and CIFAR10~\cite{krizhevsky2009learning} against the model poisoning attack~\cite{bhagoji2019analyzing} under IID (identically independently distributed) and non-IID settings. The results demonstrate that FL-WBC can effectively 
mitigate the attack effect on the global model in 1 communication round with nearly no accuracy drop under IID settings, and within 5 communication rounds for non-IID settings, respectively. We also conduct experiments by integrating the robust aggregation with FL-WBC. The results show that even though the robust aggregation is ineffective under extremely strong attacks, the attack can still be efficiently mitigated by applying FL-WBC. 

Our key contributions are summarized as follows:
\vspace {-2mm}
\begin{itemize}
    \item To the best of our knowledge, this is the first work to quantitatively assess the effect of model poisoning attack on the global model in FL. Based on our proposed estimator, we reveal the reason for the long-lasting effect of a model poisoning attack  on the global model. 
    \item We design a defense, which is also the first defense to the best of our knowledge, to effectively mitigate a model poisoning attack that has already polluted the global model. We also derive a robustness guarantee in terms of $AEP$ and a convergence guarantee to FedAvg when applying our defense.
    \item We  evaluate our defense on Fashion-MNIST and CIFAR10 against state-of-the-art model poisoning attacks. The results show that our proposed defense can enhance the robustness of FL in an effective and efficient way, i.e., our defense defends against the attack in fewer communication rounds with less model utility degradation.
\end{itemize}

\section{Related work} \label{sec:related_work}

\paragraph{Model poisoning attacks in FL}
Model poisoning attack can be \textit{untargeted} \cite{mhamdi2018hidden,fang2020local,baruch2019little,mahloujifar2019universal} or \textit{targeted} \cite{sun2019can,bagdasaryan2020backdoor,wang2020attack,xie2019dba,bhagoji2019analyzing,tomsett2019model}. \textit{Untargeted model poisoning attacks} aim to minimize the accuracy of the global model indiscriminately for any test input. For \textit{targeted model poisoning attacks}, the malicious goal is to make the global model misclassify the particular test examples as the attacker-desired target class in its prediction. An adversary using this approach can implant hidden backdoors into the global model so that the images with a trojan trigger will be classified as attacker-desired labels, known as a backdoor attack \cite{sun2019can,bagdasaryan2020backdoor,wang2020attack,xie2019dba}. Another type of targeted model poisoning attack is introduced in \cite{bhagoji2019analyzing,tomsett2019model}, which aims to fool the global model to produce adversarial misclassification on a set of chosen inputs with high confidence. Our work focuses on the targeted model poisoning attacks in \cite{bhagoji2019analyzing,tomsett2019model}.

\paragraph{Mitigate model poisoning attacks in FL}
A number of robust aggregation approaches have been proposed to mitigate data poisoning attacks while retaining the performance of FL.
One typical approach is to detect and down-weight the malicious client's updates on the central server side~\cite{yin2018byzantine,pillutla2019robust,fu2019attack,siegel1982robust}, thus the attack effects on training performance can be diminished. The central server calculates coordinate-wise median or coordinate-wise trimmed mean for local model updates before performing aggregation~\cite{yin2018byzantine}. Similarly, \cite{pillutla2019robust} suggests applying geometric median to local updates that are uploaded to the server. 
Meanwhile, some heuristic-based aggregation rules~\cite{fung2018mitigating,blanchard2017machine,mhamdi2018hidden,sattler2020byzantine,munoz2019byzantine} have been proposed to cluster participating clients into  a benign group and a malicious group, and then perform aggregation on the benign group only. FoolsGold~\cite{fung2018mitigating} assumes that benign clients can be distinguished from attackers by observing the similarity between malicious clients' gradient updates, but Krum~\cite{blanchard2017machine,mhamdi2018hidden} utilizes the similarity of benign clients' local updates instead. 
In addition, \cite{sun2019can, naseri2020toward} show that applying differential privacy to the aggregated global model can improve the robustness against model poisoning attacks. All these defensive methods are deployed at the server side and their goals are to mitigate model poisoning attacks during aggregation. Unfortunately, often in extreme cases (e.g. attackers occupy a large proportion of total clients), existing robust aggregation methods fail to prevent the aggregation from being polluted by the malicious local updates showing that it is not sufficient to offer defense via aggregation solely.
Thus,  there is an urgent necessity to design a novel local training method in FL to enhance its robustness against model poisoning attacks at the client side, which is complementary to existing robust aggregation approaches.
\section{Motivation} \label{sec:motivation}
Although current server-based defense approaches can defend against model poisoning attacks  under most regular settings, it is not clear whether their robustness can still be guaranteed under extremely strong attacks, i.e., with significantly larger numbers of malicious devices involved in training. To investigate the robustness of current methods under such challenging but practical settings, we evaluate Coordinate Median aggregation (CMA) and Coordinate Trimmed Mean aggregation (CTMA)~\cite{yin2018byzantine} on the model poisoning attack with Fashion-MNIST dataset, which is performed by following the settings in \cite{bhagoji2019analyzing}. The goal of the attacks is to make the global model misclassify some specified data samples as target classes.
In this experiment, we denote a communication round as an adversarial round $t_{adv}$ when malicious devices participate in the training, and $N_m$ malicious devices would participate in training at adversarial rounds.  We assume that there are 10 devices involved in training in each round, but increase $N_m$ from 1 to 5 to vary the strength of the attacks.
We conduct experiments under IID setting and the training data is uniformly distributed to 100 devices. The model architecture can be found in Table \ref{tab:model_arch}. For training, we set local epoch $E$ as 1 and batch size $B$ as 32. We apply SGD optimizer and set the learning rate $\eta$ to 0.01.
The results of confidence that the global model would miss-classify the poisoning data point are shown in Figure ~\ref{fig:defense_baselines}. 

\begin{figure}[t]
\centering
     \includegraphics[scale=0.4]{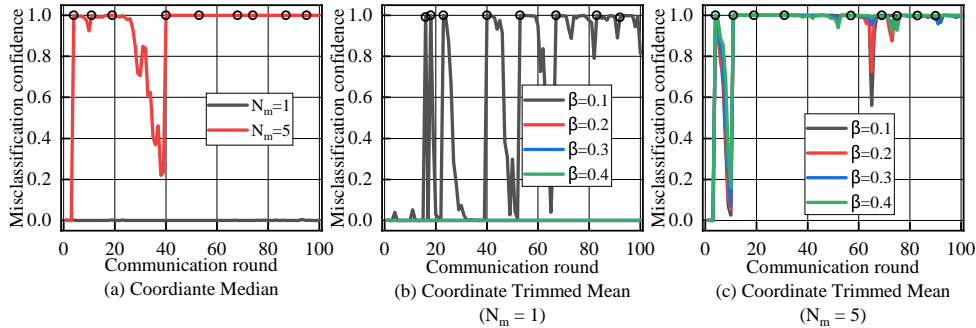}
\vspace{-2mm}
\caption{Defense performance of Coordinate Median aggregation and Coordinate Trimmed Mean aggregation. The black circle denotes the adversarial rounds for all the strategies in the figure.}

\label{fig:defense_baselines}
\vspace{-2mm}
\end{figure}

The results show that the effectiveness of both CMA and CTMA dramatically degrades when there are 50\% of malicious devices in the adversarial rounds. It is worthy noting that the attack impact on model performance will remain for subsequent rounds even if no additional attacks occur. We observe the same phenomenon in alternative robust aggregation approaches, and more detailed results are presented in \S\ref{sec:evaluation}. Therefore, in order to build a more robust FL system, it is necessary to instantly mitigate the impact of model poisoning attack as long as the global model is polluted by malicious devices. This has motivated us to design FL-WBC to ensure sufficient robustness of FL even under extremely strong attacks.

\section{Model Poisoning Attack  in FL} \label{sec:attack_analysis}
To better understand the impact of model poisoning attacks in FL scenarios,  we first need to theoretically analyze how the poisoning attack affects the learning process and provide a mathematical estimation to quantitatively assess the attack effect on model parameters. During this process we come to a deeper understanding of the  reasons for the persistence of the attack effect observed in \S\ref{sec:motivation}. Without loss of generality, we employ FedAvg~\cite{mcmahan2017communication}, the most widely applied FL algorithm as the representative FL method throughout this paper.


\subsection{Problem Formulation}

The learning objective of  FedAvg is defined as:

{\small
\begin{equation}
\label{eqn:fedavg}
\mW = \min\limits_{\mW} \{ F(\mW) \triangleq \sum\limits_{k=1}^{N} p^{k} F^{k}(\mW) \},
\end{equation}
}%
where $\mW$ is the weights of the global model, $N$ represents the number of devices, $F^k$ is the local objective of the $k$-th device, $p^k$ is the weight of the $k$-th device, $p^k \geq 0$ and $\sum_{k=1}^N p^{k} = 1$.

Equation \ref{eqn:fedavg} is solved in an iterative device-server communication fashion. For a given communication round (e.g. the $t$-th), the central server first randomly selects $K$ devices to compose  a set of participating devices $\sS_t$ and then broadcasts the latest global model $\mW_{t-1}$ to these devices. Afterwards, each device (e.g. the $k$-th) in $\sS_t$ performs $I$ iterations of local training using their local data. However, the benign devices and malicious devices perform the local training in different manners.
Specifically, if the $k$-th device is benign, in each iteration (e.g. the $i$-th), the local model $\mW_{t,i}^k$ on the $k$-th device is updated following:
{\small
\begin{equation}
\label{equ:update_benign}
\mW_{t,i+1}^{k} \leftarrow \mW_{t,i}^{k} - {\eta}_{t,i}\nabla F^{k}(\mW_{t,i}^{k},{\xi}_{t,i}^{k}),
\end{equation}
}%
where ${\eta}_{t,i}$ is the learning rate, ${\xi}_{t,i}^{k}$ is a batch of data samples uniformly chosen from the $k$-th device and $\mW_{t,0}^k$ is initialized as $\mW_{t-1}$. In contrast, if the $k$-th device is malicious, the local model $\mW_{t,i}^k$ is updated according to:
{\small
\begin{equation}
\label{equ:mal_update}
\mW_{t,i+1}^{k} \leftarrow \mW_{t,i}^{k} - {\eta}_{t,i}[\alpha \nabla F^{k}(\mW_{t,i}^{k},{\xi}_{t,i}^{k}) +(1-\alpha)\nabla F_{M}(\mW_{t,i}^{k}, {\pi}_{t,i})],
\end{equation}
}%
where $F_M$ is the malicious objective shared by all the malicious devices. $D_M$ is a malicious dataset that consists of the data samples following the same distribution as the benign training data but with adversarial data labels. All the malicious devices share the same malicious dataset $D_M$ and ${\pi}_{t,i}$ is a batch of data samples from $D_M$ used to optimize the malicious objective. Except that they share a malicious dataset, the malicious attackers have the same background knowledge as the benign clients. The goal of the attackers is to make the global model achieve a good performance on the malicious objective (i.e. targeted misclassification on $D_M$). Considering the obliviousness of attack, the malicious devices also optimize benign objective, and the trade-off between the benign and malicious objectives is controlled by $\alpha$, where $\alpha\in[0,1]$. Finally, the server averages the local models of the selected $K$ devices and updates the global model as follows:
{\small
\begin{equation}
\label{equ:aggregation}
\mW_{t} \leftarrow \frac{N}{K}\sum\limits_{k \in \sS_t} p^{k} \mW_{t,I}^{k}.
\end{equation}
}%
\subsection{Estimation of Attack Effect on Model Parameters}
Based on the above formulated training process, we analyze the impact of poisoning attacks on model parameters.
To this end, we denote the set of attackers as $\sM$, and introduce a new notation $\mW_t(\sS_i \setminus \sM)$, which represents the global model weights in the $t$-th round when all malicious devices in $\sS_i (i\leq t)$ do not perform the attack in the $i$-th training round. Specifically, when $i = t$, $\mW_t(\sS_t \setminus \sM)$ is optimized following:

{\small
\begin{equation}
\mW_{t}(\sS_t \setminus \sM) \leftarrow \frac{N}{K}\sum\limits_{k \in \sS_t} p^{k} \mW_{t,I}^{k}(\alpha=1),
\end{equation}
}%

where $\mW_{t,I}^{k}(\alpha=1)$ indicates that $\mW_{t,I}^{k}$ is trained  using Equation~\ref{equ:mal_update} with setting $\alpha=1$ (i.e.,  the $k$-th device is benign). A special case is $\mW_t(\sS \setminus \sM)$, which means the global model is optimized when all the malicious devices do not conduct attacks before the $t$-th round. To quantify the attack effect on the global model, we define the Attack Effect on Parameter (AEP) as follows:

\begin{definition}
\label{def:AEP}
Attack Effect on Parameter (AEP), which is denoted as $\delta_t$, is the change of the global model parameters accumulated until $t$-th round due to the attack conducted by the malicious devices in the FL system:
{\small
\begin{equation}
\delta_{t} \triangleq \mW_{t}(\sS \setminus \sM) - \mW_{t}.
\end{equation}
}%
\end{definition}

Based on $AEP$, we can quantitatively  evaluate the attack effect on the malicious objective using $F_M(\mW_{t}(\sS \setminus \sM)-\delta_t)-F_M(\mW_{t}(\sS \setminus \sM)).$ 
As Figure~\ref{fig:defense_baselines} illustrates, although $\mW_{t}(\sS \setminus \sM)$ keeps updating after an adversarial round and there are no more attacks before  the next adversarial round, the attack effect on the global model, i.e., $F_M$ , remains for a number of rounds.
Based on such an observation, we assume that the optimization of malicious objective is dominated by $\delta_t$ compared to $\mW_{t}(\sS \setminus \sM)$, which is learned from the benign objective. Consequently, if the attack effect in round $\tau$ remains for further rounds, $\|\delta_{t+1}-\delta_{t}\|$ should be small for $t\geq \tau.$

To analyze why the attack effect can persist in the global model, we consider the scenario where the malicious devices are selected in round $\tau_1$ and $\tau_2$, but will not be selected between these two rounds. We derive an estimator of $\delta_{t}$ for $\tau_1 < t < \tau_2$, denoted as $\hat{\delta}_{t}$:

{\small
\begin{equation}
\label{eq: est_AEP}
\hat{\delta}_{t} = \frac{N}{K}[\sum\limits_{k \in \sS_t} p^{k} \prod\limits_{i = 0}^{I-1}{(\mI - \eta_{t,i}\mH_{t,i}^k)}] \hat{\delta}_{t-1},
\end{equation}
}%
where $\mH_{t,i}^k \triangleq \nabla^2 F^{k}(\mW_{t,i}^{k},{\xi}_{t,i}^{k}).$ The derivation process is presented in Appendix \ref{app:est_AEP}. Note that, we do not restrict the detailed malicious objective during derivation, and thus our estimator and analysis can be extended to other attacks, such as backdoor attacks.

\subsection{Unveil Long-lasting Attack Effect}\label{sec:unveil_attack}

The key observation from Equation \ref{eq: est_AEP} is that if $\hat{\delta}_{\tau}$ is in the kernel of each $\mH_{t,i}^k$ for $i$-th iteration where $k\in \sS_t$ and $t > \tau$, then $\hat{\delta}_{t}$ will be the same as $\hat{\delta}_{\tau}$, which keeps $AEP$ in the global model. 
Based on this observation, we discover that {\bf{the reason why attack effects remain in the aggregated model is that the $AEP$s  reside in the kernel of $\mH_{t,i}^k$.}} 
To validate our analysis, we conduct experiments on Fashion-MNIST with model poisoning attacks in FL. The experiment details and results are shown in Appendix \ref{app:exp_4.3}. The results show that $\|\mH_{t,i}^k \delta_t \|_2$ would be nearly 0 under effective attacks. We also implement attack boosting by regularizing $\delta_t$ to be in the kernel of $\mH_{t,i}^k$. 

The above theoretical analysis and experiment results suggest that all the server-based defense methods (e.g. robust aggregation) will not be able efficiently mitigate the impact of model poisoning attacks to the victim global model. The fundamental reason for the failure of these mitigations is that: \textbf{the transmission of $AEP$ $\delta_t$ in global model is determined by $\mH_{t,i}^k$, which is inaccessible by the central server.} Therefore, it is necessary to design an effective defense mechanism at client side aiming at mitigating attack that has already polluted the global model to further enhance the robustness of FL.
\section{FL-WBC} \label{sec:design}

\subsection{Defense Design}
Our aforementioned analysis shows that $AEP$ resides in the kernels of the Hessian matrices that are generated during the benign devices' local training. In this section, we propose \textit{White Blood Cell for Federated Learning (FL-WBC)} to  efficiently mitigate the attack effect on the global model. In particular, we reform the local model training of benign devices to achieve two goals:

\begin{itemize}
    \item Goal 1: To maintain the benign task's performance, loss of local benign task should be minimized.
    
    \item Goal 2: To prevent $AEP$ from being hidden in the kernels of  Hessian matrices on benign devices, the kernel of $\mH_{t+1,i}^k$ should be perturbed.
\end{itemize}

It is computationally unaffordable to perform the  perturbance on $\mH_{t,i}^k$ directly due to its high dimension.
Therefore, in order to achieve Goal 2, we consider the essence of $\mH_{t,i}^k$, i.e., second-order partial derivatives of the loss function, where the diagonal elements describe the change of gradients $\nabla F^{k}(\mW_{t,i+1}^{k}) - \nabla F^{k}(\mW_{t,i}^{k})$ across iterations. We assume a fixed learning rate is applied for each communication round, and then $\nabla F^{k}(\mW_{t,i+1}^{k}) - \nabla F^{k}(\mW_{t,i}^{k})$ can be approximated by $(\Delta \mW_{t,i+1}^k - \Delta \mW_{t,i}^k)/\eta_{t,i}$.
In the experiments presented in \S4.3, we observe that $\mH_{t,i}^k$  has more than 60\% elements to be zero in the most of iterations. When $\mH_{t,i}^k$ is highly sparse, we add noise to the small-magnitude elements on its diagonal, which is approximately $(\Delta \mW_{t,i+1}^k - \Delta \mW_{t,i}^k)/\eta_{t,i}$, to perturb the null space of $\mH_{t,i}^k$. Formally, we have two steps to optimize $\mW_{t,i+1}^k$:

{\small
   \begin{align}
    &\hat{\mW_{t,i+1}^{k}} = \mW_{t,i}^{k} - {\eta}_{t,i}\nabla F^{k}(\mW_{t,i}^{k},{\xi}_{t,i}^{k}) \label{equ:main_obj} \\
     &\mW_{t,i+1}^{k}=\hat{\mW_{t,i+1}^{k}} + \eta_{t,i}\Upsilon_{t,i}^{k} \odot \mM_{t,i}^{k}, \label{equ:prox_obj}
\end{align} 
}%

where $\Upsilon_{t,i}^{k}$ is a matrix with the same shape of $\mW$, and $\mM_{t,i}^{k}$ is a binary mask whose elements are determined as:

{\small
    \begin{equation}
    {\mM_{t,i}^{k}}_{r,c} = \left\{
    \begin{aligned}
    1, &{| (\hat{\mW_{t,i+1}^{k}}-\mW_{t,i}^k) - \Delta \mW_{t,i}^k|}_{r,c}/\eta_{t,i} \leq {|\Upsilon_{t,i}^{k}}_{r,c}|\\
    0, &{| (\hat{\mW_{t,i+1}^{k}}-\mW_{t,i}^k) - \Delta \mW_{t,i}^k|}_{r,c}/\eta_{t,i} > {|\Upsilon_{t,i}^{k}}_{r,c}|,
    \end{aligned}
    \right.
\end{equation}
}%

where ${\mM_{t,i}^{k}}_{r,c}$ is the element on the $r$-th row and $c$-th column of $\mM_{t,i}^{k}$. Conceptually, ${\mM_{t,i+1}^{k}}$ finds the small-magnitude elements on the $\mH_{t,i}^k$'s diagonal.

Note that we have different choices of $\Upsilon_{t,i}^{k}$. In this work, we set $\Upsilon_{t,i}^{k}$ as Laplace noise with $mean=0$ and $std=s$, since the randomness of $\Upsilon_{t,i}^{k}$ will make attackers harder to determine the defense strategy. Specifically, our defense is to find the elements in $\hat{\mW_{t,i+1}^{k}}$ whose corresponding values in $| (\hat{\mW_{t,i+1}^{k}}-\mW_{t,i}^k) - \Delta \mW_{t,i}^k|/\eta_{t,i}$ are smaller than the counterparts in $|\Upsilon_{t,i}^{k}|$. The detailed algorithm describing the local training process on benign devices when applying FL-WBC can be found in Appendix~\ref{app:alg}. We derive a certified robustness guarantee for our defense, which provides a lower bound of distance of AEP between the adversarial round and the subsequent rounds. The detailed theorem of the certified robustness guarantee can be found in Appendix~\ref{app:robust_guarantee}.

\subsection{Robustness to Adaptive attacks}

Our defense is robust against adaptive attacks~\cite{tramer2020adaptive,carlini2019evaluating} since the attacker cannot know the detailed defensive operations even after conducting the attack for three reasons. First, our defense is performed during the local training at the client side, where the detailed defensive process is closely related to benign clients' data. Such data is inaccessible to the attackers, and hence the attackers cannot figure out the detailed defense process. Second, even if the attackers have access to benign clients' data (which is a super strong assumption and beyond our threat model), the attackers cannot predict which benign clients will be sampled by the server to participate in the next communication round. Third, in the most extreme case where attackers have access to benign clients' data and can predict which clients will be sampled in the next round (which is an unrealistic assumption), the attackers still cannot successfully bypass our defense. The reason is that the defense during the benign local training is mainly dominated by the random matrix $\Upsilon_{t,i}^{k}$ in Equation~\ref{equ:prox_obj}, which is also unpredictable. With such unpredictability and randomness of our defense, no effective attack can be adapted.

\section{Convergence Guarantee} \label{sec:convergence}

In this section, we derive the convergence guarantee of FedAvg~\cite{mcmahan2017communication}---the most popular FL algorithm, with our proposed FL-WBC. 
We follow the notations in \S\ref{sec:attack_analysis} describing FedAvg, and the only difference after applying FL-WBC is the local training process of benign devices. Specifically, for the $t$-th round, the local model on the $k$-th benign device is updated as:

\begin{align}
& \nabla F^{k'}(\mW_{t,i}^{k},{\xi}_{t,i}^{k}) = \nabla F^k(\mW_{t,i}^{k},{\xi}_{t,i}^{k}) + \mathcal{T}_{t,i} \label{equ:perturb_gradient}\\
& \mW_{t,i+1}^{k} \leftarrow \mW_{t,i}^{k} - {\eta}_{t,i}\nabla F^{k'}(\mW_{t,i}^{k},{\xi}_{t,i}^{k}), 
\end{align}

where $\mathcal{T}_{t,i}$ is the local updates generated by the perturbance step in Equation~\ref{equ:prox_obj}.

Our convergence analysis is inspired by~\cite{li2019convergence}. 
Before presenting our theoretical results, we first make the following Assumptions~\ref{asm:L-smooth}-\ref{asm:gradient_norm} same as~\cite{li2019convergence}.

\begin{assumption}\label{asm:L-smooth}
$F^1, F^2, ..., F^N$ are L-smooth: $\forall \mV, \mW$, $F^k(\mV) \leq F^k(\mW) + (\mV - \mW)^T \nabla F^k(\mW) + \frac{L}{2} ||\mV - \mW||_2^2$.
\end{assumption}

\begin{assumption}\label{asm:u-convex}
$F_1, F_2, ..., F_N$ are $\mu$-strongly convex: $\forall \mV, \mW$, $F^k(\mV) \geq F^k(\mW) + (\mV - \mW)^T \nabla F^k(\mW) + \frac{\mu}{2} ||\mV - \mW||_2^2$.
\end{assumption}

\begin{assumption}\label{asm:stochastic_gradients}
Let ${\xi}_{t}^{k}$ be sampled from the $k$-th device’s local data uniformly at random. The variance of stochastic gradients in each device is bounded: $\mathbb{E} ||\nabla F^k(\mW_{t,i}^{k},{\xi}_{t,i}^{k}) - \nabla F^k(\mW_{t,i}^{k}) ||^{2} \leq \sigma_k^2$ for $k = 1, ..., N$.
\end{assumption}

\begin{assumption}\label{asm:gradient_norm}
The expected squared norm of stochastic gradients is uniformly bounded, i.e., $\mathbb{E} ||\nabla F^k(\mW_{t,i}^{k},{\xi}_{t,i}^{k}) ||^{2} \leq G^2$ for all $k = 1, ..., N$, $i = 0, ..., I-1$  and $t = 0, ..., T-1$.
\end{assumption}

We define $F^*$ and $F^{k*}$ as the minimum value of $F$ and $F^k$ and let $\Gamma = F^* -  \sum\limits_{k=1}^{N} p_{k} F^{k*}$. 
We assume each device has $I$ local training iterations in each round and the total number of rounds is $T$. 
Then, we have the following convergence guarantee on FedAvg with our defense.

\begin{theorem}
\label{thm:convergence}
Let Assumptions \ref{asm:L-smooth}-\ref{asm:gradient_norm} hold and $L, \mu, \sigma_k, G$ be defined therein. Choose $\kappa = \frac{L}{\mu}$, $\gamma = \max \{8\kappa, I \}$ and the learning rate $\eta_{t,i} = \frac{2}{\mu(\gamma + tI + i)}$. 
Then FedAvg with our defense satisfies 
{\small
\begin{equation*}
\mathbb{E}[F(\mW_T)] - F^* \leq \frac{2\kappa}{\gamma + TI}(\frac{Q+ C}{\mu} + \frac{\mu\gamma}{2} \mathbb{E}||\mW_0 - {\mW}^*||^2),
\end{equation*}
}%
where
{\small
\begin{align*}
& Q = \sum\limits_{k=1}^{N} p_k^2 (s^2 + \sigma_k^2) + 6L\Gamma + 8(I-1)^2(s^2 +G^2) \\
& C = \frac{4}{K}{I}^2(s^2 +G^2). 
\end{align*}
}%
\end{theorem}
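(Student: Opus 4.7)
The plan is to adapt the convergence analysis of FedAvg on non-IID data from \cite{li2019convergence}, where the only structural modification is the additive perturbation $\mathcal{T}_{t,i}$ appearing in Equation~\ref{equ:perturb_gradient}. Under our construction of $\Upsilon_{t,i}^k$ as zero-mean Laplace noise with standard deviation $s$ (masked by $\mM_{t,i}^k$ whose entries lie in $\{0,1\}$), the perturbation satisfies $\mathbb{E}[\mathcal{T}_{t,i}]=0$ and $\mathbb{E}\|\mathcal{T}_{t,i}\|^2 \leq s^2$ (the mask can only shrink the variance). This is the single probabilistic fact about the defense we need; everything else is identical to standard FedAvg.

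First, I would set up the usual virtual sequences: let $\bar{\mW}_t = \sum_{k=1}^N p_k \mW_{t,i}^k$ be the full-device average, and let $\mV_{t,i+1} = \bar{\mW}_{t,i} - \eta_{t,i}\bar{\vg}_{t,i}$ where $\bar{\vg}_{t,i} = \sum_k p_k \nabla F^{k\prime}(\mW_{t,i}^k,\xi_{t,i}^k)$ is the aggregate \emph{perturbed} stochastic gradient. Then I would decompose
\begin{equation*}
\mathbb{E}\|\mV_{t,i+1} - \mW^*\|^2 \leq (1-\eta_{t,i}\mu)\mathbb{E}\|\bar{\mW}_{t,i} - \mW^*\|^2 + \eta_{t,i}^2 A_1 + \eta_{t,i}^2 A_2,
\end{equation*}
where $A_1$ bounds $\mathbb{E}\|\bar{\vg}_{t,i}-\bar{g}_{t,i}\|^2$ (stochastic-gradient plus perturbation variance, yielding $\sum_k p_k^2(\sigma_k^2 + s^2)$) and $A_2$ bounds the ``local drift'' $\sum_k p_k \mathbb{E}\|\bar{\mW}_{t,i}-\mW_{t,i}^k\|^2$. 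The drift bound is where the $(I-1)^2$ factor arises, and crucially here the one-step update norm is $\mathbb{E}\|\eta_{t,i}(\nabla F^k + \mathcal{T}_{t,i})\|^2 \leq 2\eta_{t,i}^2(G^2 + s^2)$, so telescoping $I-1$ local steps produces the term $8(I-1)^2(s^2 + G^2)$ in $Q$. Together with the $6L\Gamma$ term from the standard $\mu$-strong convexity / $L$-smoothness analysis (Lemmas~1--2 of \cite{li2019convergence}), I obtain the definition of $Q$ given in the theorem.

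Second, I would account for the device sampling. Since only $K$ of the $N$ devices are selected each round, the server-side average $\bar{\mW}_t^{\sS_t} = \frac{N}{K}\sum_{k\in\sS_t}p_k \mW_{t,I}^k$ is an unbiased estimator of $\bar{\mW}_t$, and its conditional variance is bounded by $\frac{4}{K}\eta_{t,I-1}^2 I^2 \mathbb{E}\|\nabla F^{k\prime}\|^2 \leq \frac{4}{K}\eta_{t,I-1}^2 I^2 (s^2 + G^2)$, which is exactly $\eta_{t,I-1}^2 C$. Adding this to the previous recursion yields
\begin{equation*}
\mathbb{E}\|\bar{\mW}_{t,i+1} - \mW^*\|^2 \leq (1-\eta_{t,i}\mu)\mathbb{E}\|\bar{\mW}_{t,i} - \mW^*\|^2 + \eta_{t,i}^2(Q + C).
\end{equation*}

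Finally, I would close the recursion using the diminishing step size $\eta_{t,i} = \frac{2}{\mu(\gamma + tI + i)}$ with $\gamma = \max\{8\kappa, I\}$. By induction on $tI+i$, one shows $\mathbb{E}\|\bar{\mW}_{t,i} - \mW^*\|^2 \leq \frac{v}{\gamma + tI + i}$ for $v = \max\{\tfrac{4(Q+C)}{\mu^2},\, (\gamma+1)\mathbb{E}\|\mW_0-\mW^*\|^2\}$, and then applying $L$-smoothness $\mathbb{E}[F(\mW_T)] - F^* \leq \frac{L}{2}\mathbb{E}\|\mW_T - \mW^*\|^2$ with $\kappa = L/\mu$ produces the claimed bound. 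The hard part is not the recursion itself (which is mechanical once the variance bounds are in place) but cleanly bundling $\mathcal{T}_{t,i}$ into both the ``noise in aggregate gradient'' term and the ``per-step update magnitude'' term, because the defense contributes $s^2$ in two conceptually different places; missing either inflation would give a bound that silently underestimates the cost of the perturbation.
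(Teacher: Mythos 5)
Your proof is correct and follows essentially the same route as the paper: the whole argument reduces to showing the masked perturbation satisfies $\mathbb{E}\|\mathcal{T}_{t,i}\|^2 \le s^2$, which inflates the per-device variance bound to $\sigma_k^2+s^2$ and the gradient-norm bound to $G^2+s^2$, after which the FedAvg convergence theorem of \cite{li2019convergence} yields the stated $Q$ and $C$. The only difference is presentational --- the paper invokes that theorem as a black box with the new constants, whereas you re-derive its virtual-sequence recursion, drift bound, sampling variance, and induction inline.
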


\begin{proof}
See our proof in Appendix~\ref{app:convergence}.
\end{proof}
\section{Experiments} \label{sec:evaluation}

In our experiments, we evaluate FL-WBC against targeted model poisoning attack \cite{bhagoji2019analyzing} described in \S\ref{sec:attack_analysis} under both IID and non-IID settings. Experiments are conducted on a server with two Intel Xeon E5-2687W CPUs and four Nvidia TITAN RTX GPUs.

\subsection{Experimental Setup}
\vspace{-1.5mm}

\paragraph{Attack method.}

We evaluate our defense against model poisoning attack shown in \cite{bhagoji2019analyzing,tomsett2019model}. There are several attackers in FL setup and all the attackers share a malicious dataset $D_M$, whose data points obey the same distribution with benign training data while having adversarial labels. We let all the attackers conduct the model poisoning attack at adversarial rounds $t_{adv}$ simultaneously such that the attack will be extremely strong. 

\vspace{-3.5mm}
\paragraph{Defense baseline.}

We compare our proposed defense with two categories of defense methods that have been widely used: (1) \textbf{Differential privacy (DP)} improves robustness with theoretical guarantee by clipping the gradient norm and injecting perturbations to the gradients.  We adopt both \textbf{Central Differential privacy (CDP)}~\cite{naseri2020toward} and \textbf{Local Differential privacy (LDP)}~\cite{naseri2020toward} for comparisons. We set the clipping norm as 5 and 10 for Fashion-MNIST and CIFAR10 respectively following~\cite{naseri2020toward} and apply Laplace noise with $mean=0$ and $std=\sigma_{dp}$.
(2) \textbf{Robust aggregation} improves robustness of FL by manipulating aggregation rules. We consider both \textbf{Coordinate Median Aggregation (CMA)}~\cite{yin2018byzantine} and \textbf{Coordinate Trimmed-Mean Aggregation (CTMA)}~\cite{yin2018byzantine} as baselines.

\vspace{-3.5mm}

\paragraph{Datasets.}

To evaluate our defense under more realistic FL settings, we construct IID/non-IID datasets based on Fashion-MNIST and CIFAR10 by following the configurations in~\cite{mcmahan2017communication}. The detailed data preparation can be found in Appendix \ref{app:exp_setup}. We sample 1 and 10 images from both datasets to construct the malicious dataset $D_{M}$ corresponding to scenarios $D_M$ having single image and multiple images. Note that, data samples in $D_M$ would not appear in training datasets of benign devices.

\vspace{-3.5mm}
\paragraph{Hyperparameter configurations.}

Each communication round is set to be the adversarial round with probability 0.1. In each benign communication round, there are 10 benign devices which are randomly selected to participate in the training. In each adversarial round, 5 malicious and 5 randomly selected benign devices participate in the training, which means there  are 50\% attackers involved in adversarial rounds. Additional configurations and model structures can be found in Appendix \ref{app:exp_setup}.

\vspace{-3.5mm}
\paragraph{Evaluation metrics.} (1) {\bf Attack metric (misclassification confidence/accuracy:)} We define misclssification confidence/accuracy as the classification confidence/accuracy of the global model on the malicious dataset.
(2) {\bf Robust metric (attack mitigation rounds):} We define \textit{attack mitigation rounds} as the number of communication rounds after which the misclassification confidence can decrease to lower than 50\% or misclassification accuracy can decrease to lower than the error rate for the benign task. 
(3) {\bf Utility metric (benign accuracy):} We use the accuracy of the global model on the benign testing set of the primary task to measure the effectiveness of FL algorithms (i.e., FedAvg~\cite{mcmahan2017communication}). The higher the accuracy is, the higher utility is obtained.

\subsection{Effectiveness of FL-WBC with \textit{Single} Image in The Malicious Dataset}

\begin{figure*}[t]
\centering
     \includegraphics[scale=0.30]{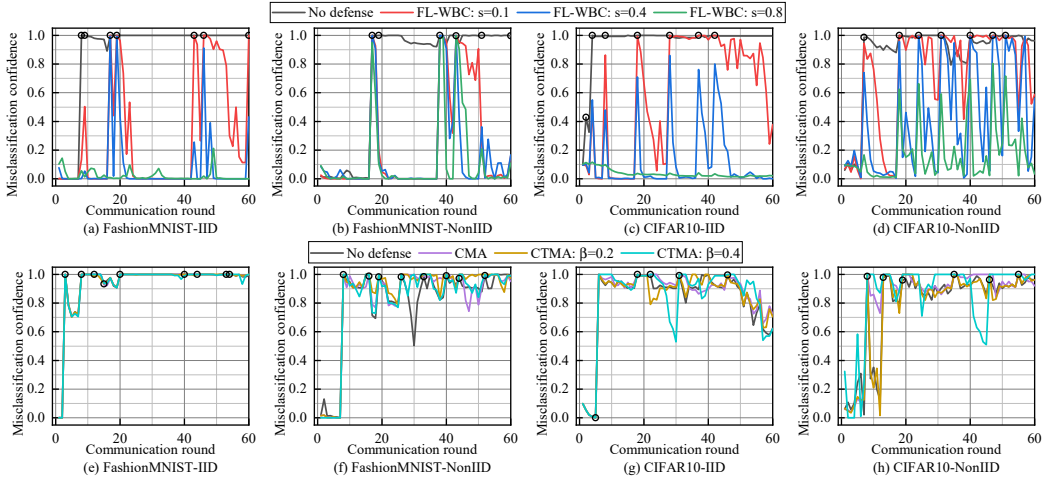}
\caption{Comparison of  misclassification confidence and communication round on FashionMNIST and CIFAR10 with IID/non-IID settings. The black circle denotes the adversarial rounds.}
\label{fig:misacc_round}
\end{figure*}

\begin{figure*}[t]
\centering
     \includegraphics[scale=0.30]{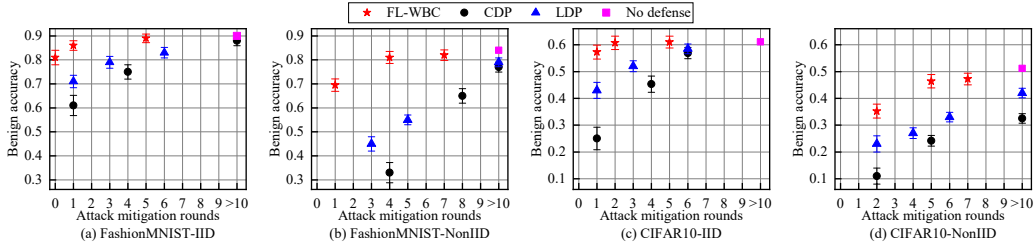}
\caption{Comparison of  benign accuracy and attack mitigation rounds on  FashionMNIST and CIFAR10 with IID/non-IID settings when $D_M$ has only one image.}
\label{fig:acc_mitigation_single}
\end{figure*}

We first show the results when there is only one image in the malicious dataset. We consider IID and non-IID settings for both Fashion-MNIST and CIFAR10 datasets. Figure~\ref{fig:misacc_round} shows the misclassification confidence of our defense and the robust aggregation baselines in the first 60 communication rounds. The results show that our defense can more effectively and efficiently  mitigate the impact of model poisoning attack in comparison with baseline methods. In particular, FL-WBC can mitigate the impact of model poisoning attack within 5 communication round when $s$ (i.e., standard deviation for $\Upsilon$) is 0.4 for both IID and non-IID settings. With regard to CMA and CTMA, the attack impact can not be mitigated within 10 subsequent rounds even when $\beta$ for CTMA is 0.4, where 80\% of local updates are trimmed before aggregation. Thus, the robust aggregation baselines fail to mitigate the model poisoning attack under our attack settings.


We also compare our defense with CDP and LDP in terms of \textit{benign accuracy} and \textit{attack mitigation rounds}. We evaluate our defense by  varying $s$ from 0.1 to 1, and evaluate DP baselines by changing $\sigma_{dp}$ from 0.1 to 10. For each defense method, we show the trade-off between \textit{benign accuracy} and \textit{attack mitigation rounds} in Figure \ref{fig:acc_mitigation_single}. We have two key observations: 1) With sacrificing less than 5\% benign accuracy, FL-WBC can mitigate the impact of model poisoning attack  on the global model in 1 communication round for IID settings, and within 5 communication rounds for non-IID settings respectively. However,  CDP and LDP fail to mitigate attack effect within 5 rounds for IID and within 10 rounds for non-IID settings with less than 5\% accuracy drop. 2) For non-IID settings where the defense becomes more challenging, FL-WBC can still mitigate the attack effect within 2 rounds with less than 15\% benign accuracy drop, but DP can not make an effective mitigation within 3 rounds with less than 30\% benign accuracy drop, leading to the unacceptable utility on the benign task. The reason of FL-WBC outperforming CDP and LDP is that FL-WBC only inject perturbations to the parameter space where the long-lasting $AEP$ resides in instead of perturbing all the parameters like DP methods. Therefore,  FL-WBC can achieve better robustness with less accuracy drop. 

In addition, we also observe that defense for non-IID settings is harder than IID settings,  the reason is that  under non-IID settings the devices train only a part of parameters~\cite{li2020lotteryfl} when holding only a few classes of data, leading to a sparser $H_{t,i}^k$ that is more likely to have a kernel with a higher dimension.


\subsection{Effectiveness of FL-WBC with \textit{Multiple} Images in The Malicious Dataset}

We evaluate the defense effectiveness of robust aggregation baselines when $D_M$ has 10 images, and the results are shown in Table \ref{tab:robust_aggregation_multi}.

{\small
\begin{table}[ht]
    \centering
    \caption{Results of attack mitigation rounds for robust aggregations when $D_M$ has multiple images.}
    \begin{tabular}{l||c|c|c|c}
    \toprule
        Defense & \tabincell{c}{Fashion-MNIST \\ (IID)} & \tabincell{c}{Fashion-MNIST \\ (non-IID)} & \tabincell{c}{CIFAR10 \\ (IID)} & \tabincell{c}{CIFAR10 \\ (non-IID)}\\
        \hline
        CTMA ($\beta=0.1$) & 7 & 9 & 8 & >10 \\
        CTMA ($\beta=0.2$) & 7 & 8 & 8 & 9 \\
        CTMA ($\beta=0.4$) & 6 & 8 & 7 & 9 \\
        CMA & 5 & 7 & 6 & 8 \\
      \bottomrule
    \end{tabular}

    \label{tab:robust_aggregation_multi}
\end{table}
}

Defense against the attack when $D_M$ has multiple images is easier than $D_M$ has only one image. The reason is that $AEP$ of multiple malicious images requires a larger parameter space to reside in compare to $AEP$ of single malicious image. 

The results show that even though attack effect will be mitigated finally when there are multiple images in $D_M$, robust aggregation can not guarantee mitigating the attack effect within 5 communication rounds for both IID and non-IID settings.

\begin{figure*}[t]
\centering
     \includegraphics[scale=0.30]{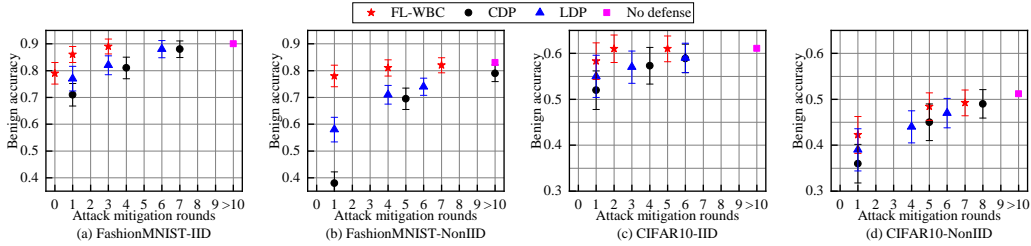}
\caption{Comparison of  benign accuracy and attack mitigation rounds on  FashionMNIST and CIFAR10 with IID/non-IID settings when $D_M$ has multiple images.}
\label{fig:acc_mitigation_multi}
\end{figure*}

We also evaluate the defense effectiveness of FL-WBC and DP baselines in terms of benign accuracy and attack mitigation rounds when $D_M$ has multiple images. The results are shown in Figure \ref{fig:acc_mitigation_multi}. 

The results show that FL-WBC can guarantee that attack impact will be mitigated in one round with sacrificing less than 3\% benign accuracy for IID settings and 10\%  for non-IID settings, respectively. However, the DP methods incur more than 9\%  benign accuracy drop to achieve the same robustness for IID settings and 40\% for non-IID settings, respectively. Therefore, FL-WBC significantly outperforms the DP methods in defending against model poisoning attacks.

\subsection{Integration of The Robustness Aggregation and FL-WBC}

\begin{figure*}[ht]
\centering
     \includegraphics[scale=0.30]{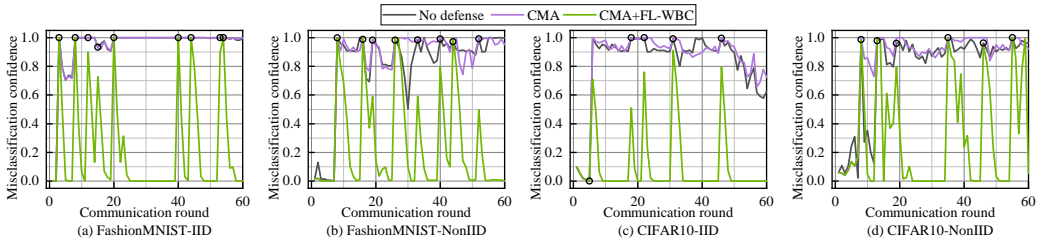}
\caption{Comparison of  misclassification confidence and communication round on FashionMNIST and CIFAR10 with IID/non-IID settings. The black circle denotes the adversarial rounds.}
\label{fig:intergrate}
\end{figure*}

We also conduct experiments by integrating the robustness aggregation with FL-WBC to demonstrate that FL-WBC is complementary to server-based defenses.
We conduct experiments by integrating Coordinate Median Aggregation (CMA) and FL-WBC. We set $s=0.4$ for FL-WBC. After applying both CMA and FL-WBC with $s=0.4$, the global model sacrifices less than 7\% benign accuracy for both Fashion-MNIST and CIFAR10 dataset under IID/non-IID settings. We conduct experiments following the same setup in \S\ref{sec:evaluation} with single image in the malicious dataset, and the results are shown in Figure \ref{fig:intergrate}. 

The results show that only CMA can not mitigate the attack effect under our experimental setting. By applying both CMA and FL-WBC, the attack effect is mitigated within 1 communication rounds under IID settings and within 5 communication rounds under non-IID settings. Thus, our defense is complementary to the server-based robustness aggregations, and further enhance the robustness of FL against model poisoning attacks under extremely strong attacks.
\section{Conclusion}\label{sec:conclusion}
We design a client-based defense against the model poisoning attack, targeting at the scenario where the attack  that has already broken through the server-based defenses and polluted the global model. The experiment results  demonstrate that our defense outperforms baselines in mitigating the attack effectively and efficiently, i.e., our defense successfully defends against the attack within fewer communication rounds with less model utility degradation. In this paper, we focus on the targeted poisoning attack~\cite{bhagoji2019analyzing,tomsett2019model}. Our defense can be easily extended to many other poisoning attacks, such as backdoor attacks, since we do not restrict the malicious objective when deriving $AEP$.

\section{Funding Transparency Statement}\label{sec:funding}

Funding in direct support of this work: NSF OIA-2040588, NSF CNS-1822085, NSF SPX-1725456, NSF IIS-2140247.

\bibliographystyle{ieeetr}
\bibliography{reference}

\clearpage

\appendix

\section{Algorithm for training process applying FL-WBC}
\label{app:alg}

The detailed local training process of benign devcies after applying FL-WBC is shown in Algorithm \ref{alg:FL-WBC}. As Algorithm \ref{alg:FL-WBC} shows, the only overheads after applying our defense is that devices need additional storage to store $\mW_{-1}$ and $\mW_{-2}$ during local training. 

\begin{algorithm}[h]
\footnotesize
\renewcommand{\algorithmicrequire}{\textbf{Input:}}
\renewcommand{\algorithmicensure}{\textbf{Output:}}
    \caption{Local training process applying FL-WBC on a benign device in round $t$. }
    \label{alg:Framwork}  
    \begin{algorithmic}[1] 
        \Require Local training data $\sD \in \sR^{L\times P\times Q}$; Local objective function $F: \sR^{P\times Q} \rightarrow \sR$; Local model parameters $\mW \in \sR^{M\times N}$; The number of local training iterations $I$; Learning rates $\eta_{t,i}$ for $i\in [I]$; Standard deviation of Laplace noise $s$. 
        \Ensure Learnt model parameter $\mW$ with our defense.
        \State Initialize $\mW_{-1}, \mW_{-2}$;
        \State $i \leftarrow 0$;
        \For {batch $\mathcal{B}$ in $\sD$}
            \State Randomly generate a Laplace noise matrix $\Upsilon \in \sR^{M\times N}$ with $mean=0$ and $std=s$;
            \State $\mW_{-1} \leftarrow \mW$;
            \State $\mW \leftarrow \mW - \eta_{t,i} \nabla F(\mW, \mathcal{B})$;
            \If {this is not the first training batch}
                \State $\mW^* \leftarrow (\mW - \mW_{-1})-(\mW_{-1}-\mW_{-2})$;
                \State Find the set $\mathbb{S}$ which contains the indices of elements in $|\mW^*|-\eta_{t,i}|\Upsilon|$ which are less than or equal to 0;
                \For{${j,k} \in \sS$}
                    \State $\mW_{j,k} \leftarrow \mW_{j,k} + \eta_{t,i}\Upsilon_{j,k}$;
                \EndFor
            \EndIf
            \State $\mW_{-2} \leftarrow \mW_{-1}$;
            \State $i \leftarrow i+1$
        \EndFor
        
    \end{algorithmic} \label{alg:FL-WBC}
\end{algorithm}

\section{Experiments to Support Analysis in \S\ref{sec:unveil_attack}}
\label{app:exp_4.3}


\begin{figure}[h]
\centering
     \includegraphics[scale=0.50]{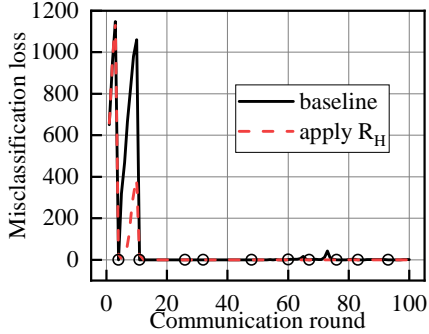}
\caption{Compared results of misclassification loss on malicious dataset for model poisoning attack with and without applying $R_H(\mW)$. The black circle denotes the adversarial rounds.}
\label{fig:unveil_attack}
\end{figure}

\begin{table}[t]
    \centering
\begin{tabular}{c||c|c}
  \toprule
  Adversarial round & baseline $\Phi_t$ & $\Phi_t$ applying $R_H(\mW)$\\
  \hline
  5 & 3.56 & 1.06\\
  \hline
  11 & 0.00 & 0.00\\
  \hline
  26 & 0.01 & 0.00\\
  \hline
  32 & 0.00 & 0.00\\
  \hline
  48 & 0.00 & 0.00\\
  \hline
  60 & 0.05 & 0.00\\
  \hline
  67 & 0.07 & 0.00\\
  \hline
  76 & 0.00 & 0.00\\
  \hline
  83 & 0.00 & 0.00\\
  \hline
  93 & 0.00 & 0.00\\
  \bottomrule

\end{tabular}
\caption{Numerical results of $\Phi_t$ with and without applying $R_H(\mW)$ at adversarial rounds.}
\label{table:unveil_attack}
\end{table}

To validate our analysis, we conduct experiments on Fashion-MNIST with model poisoning attacks in FL training. 
The training data is uniformly distributed to 100 devices (5 malicious devices included). In adversarial rounds, 5 malicious devices and 5 randomly chosen benign devices participate in training. In other rounds, 10 randomly selected benign devices participate in training. The model architecture can be found in Table \ref{tab:model_arch} (Fashion-MNIST dataset). For training, we set local epoch $E$ as 1 and batch size $B$ as 32. We apply SGD optimizer and set the learning rate $\eta$ to 0.01.

We first compute $\mathbb{E}_{i\in[I], k\in \sS_{t_{adv}+1}} (\mH_{{t_{adv}}+1,i}^k \delta_{t_{adv}})$, denoted as $\Phi_{t_{adv}}$, for each adversarial round $t_{adv}$. 
In order to derive $\delta_{t_{adv}}$, the malicious devices in round $t_{adv}$ perform local training twice by setting $\alpha$ in Equation \ref{equ:mal_update} as 1 and 0, respectively. Following that we have $\mW_{t_{adv}}$ and $\mW_{t_{adv}}(\sS_{t_{adv}} \setminus \sM)$ respectively through aggregation. Then we derive $\delta_{t_{adv}}$ by computing the difference between $\mW_{t_{adv}}(\sS_{t_{adv}} \setminus \sM)$ and $\mW_{t_{adv}}$.
Due to the extremely high dimension of $\mH_{{t_{adv}}+1,i}^k$, we compute $\Phi_{t_{adv}}$ element by element following

{\small
\begin{equation}
    {\Phi_{t_{adv}}}_{m,n} = \mathbb{E}_{i\in[I], k\in \sS_{t_{adv}+1}} \left < \nabla_{W}[{\nabla_{W}F^{k}(\mW_{{t_{adv}+1},i}^{k},{\xi}_{{t_{adv}+1},i}^{k})}_{m,n}]|\delta_{t_{adv}}\right>
\end{equation}

}%


The results of baseline in Figure \ref{fig:unveil_attack} shows the loss that the malicious data is miss-classified by the global model, and the computed $\Phi_t$ for each adversarial round is presented in Table \ref{table:unveil_attack}. The results show that for the first adversarial round (i.e. round 5), the value of $\Phi_t$ is higher than 3 and the attack is mitigated rapidly. While for the following adversarial rounds, $\Phi_t$s are nearly 0, and the miss-classification loss keeps low until the next attack is conducted. This result is consistent with our analysis in \S\ref{sec:attack_analysis} for why  attack effects can remain in the global model. 

In addition, we introduce a regularization $R_H(W)$ that approximates $\mathbb{E}_{i\in[I]} \|\mH_{t_{adv},i}^k \delta_t \|_2$ into the local training of malicious devices, enforcing $\delta_t$ to reside in the null space of $\mH_{t_{adv},i}^k$. Suppose $\mW \in \sR^{L\times M}$, then $R_H(W)$ for malicious devices $k$ is formulated as

{\small
\begin{equation}
    R_H(\mW) = \sum\limits_{l\in [L], m\in [M]}\left < \nabla_{W}[{\nabla_{W}F^{k}(\mW_{{t_{adv}},0}^{k},{\xi}_{{t_{adv}},0}^{k})}_{l,m}]|\mW - \mW_{t_{adv}}(\sS_{t_{adv}} \setminus \sM)\right>.
\end{equation}
}%

It is shown that we use the sum of elements in $\mH_{t_{adv},0}^k \delta_t$ to approximate $\mathbb{E}_{i\in[I]}\|\mH_{t_{adv},i}^k \delta_t \|_2$. We adopt this approximation due to unacceptable computational costs of computing $\mathbb{E}_{i\in[I]}\|\mH_{t_{adv},i}^k \delta_t \|_2$ directly. When $\mathbb{E}_{i\in[I]}\|\mH_{t_{adv},i}^k \delta_t \|_2$ is zero, the sum of elements in $\mH_{t_{adv},0}^k \delta_t$ should also be zero. So $R_H(\mW)$ is a weaker regularization compared to $\mathbb{E}_{i\in[I]}\|\mH_{t_{adv},i}^k \delta_t \|_2$.

We aim to evaluate  whether the poisoning attack can be boosted after applying $R_H(\mW)$, i.e., the attack effect remains for more rounds. As  Figure \ref{fig:unveil_attack} and Table~\ref{table:unveil_attack} show, the values of $\Phi_t$ in round 5, 60 and 67 are reduced after applying $R_H$ and the corresponding attacks are boosted. This result further supports our analysis that the reason why effective attacks can remain in the aggregated model is that the $AEP$s reside in the kernels of $\mH_{t,i}^k$. 

\section{Experiment setup}
\label{app:exp_setup}

\subsection{Detailed data preparation and hyperparameter configurations for \S\ref{sec:evaluation}}

For IID settings, the data is uniformly distributed to 100 devices (malicious devices included). For non-IID settings, we first sort the data by the digit label, divide it into 200 shards uniformly, and assign each of 100 clients (malicious devices included) 2 shards. 

For training, we set local epoch $E$ as 1 and batch size $B$ as 32. We apply SGD optimizer and set the learning rate $\eta$ to 0.01. We set 5 devices out of totally 100 devices to be malicious. The model architectures for two dataset are shown in Table \ref{tab:model_arch}. We conduct 500 communication rounds of training for Fashion-MNIST and 1000 communication rounds for CIFAR10.

{\small
\begin{table}[ht]
    \centering
    \caption{Model architectures for \textit{Fashion-MNIST} dataset and \textit{CIFAR10} dataset.}
    \begin{tabular}{c||c}
    \toprule
      \textbf{\textit{Fashion-MNIST}} & \textbf{\textit{CIFAR10}} \\
      \hline
      5$\times$ 5 Conv 1-16   & 5$\times$ 5 Conv 3-6\\
      5$\times$ 5 Conv 16-32 & 3$\times$ 3 Maxpool \\
      FC--10 & 5$\times$ 5 Conv 6-16\\
      & 3$\times$ 3 Maxpool \\
      & FC--120\\
      & FC--84\\
      & FC--10\\
      \bottomrule
    \end{tabular}

    \label{tab:model_arch}
\end{table}
}

\section{Derivation of Equation~\ref{eq: est_AEP}}
\label{app:est_AEP}

Since no malicious devices are selected between round $\tau_1$ and round $\tau_2$, the training processes of $\mW_{t-1}(\sS \setminus \sM)$ to $\mW_{t}(\sS \setminus \sM)$ and $\mW_{t-1}$ to $\mW_{t}$ are the same. The cause of difference between $\mW_{t}(\sS \setminus \sM)$ and $\mW_{t}$ is that they are locally trained from different initial models, $\mW_{t-1}(\sS \setminus \sM)$ and $\mW_{t-1}$, respectively. 

We first estimate the term $\mW_{t}^k(\sS \setminus \sM) - \mW_{t}^k$ by applying first-order Taylor approximation and the chain rule based on Equation \ref{equ:update_benign}:

{\small
\begin{align}
    &\mW_{t,I}^k(\sS \setminus \sM) - \mW_{t,I}^k \nonumber \\
    \approx &\frac{\partial \mW_{t,I}^k}{\partial \mW_{t,I-1}^k} \frac{\partial \mW_{t,I-1}^k}{\partial \mW_{t,I-2}^k}\dots\frac{\partial \mW_{t,1}^k}{\partial \mW_{t,0}^k}(\mW_{t,0}^k(\sS \setminus \sM) - \mW_{t,0}^k). \label{equ:taylor_approx}
\end{align}
}%

where $\mW_{t,0}^k(\sS \setminus \sM) - \mW_{t,0}^k = \mW_{t-1}(\sS \setminus \sM) - \mW_{t-1}.$ According to Equation \ref{equ:update_benign}, we have 

{\small
\begin{equation}
\frac{\partial \mW_{t,i+1}^k}{\partial \mW_{t,i}^k} = \mI - \eta_{t,i}\mH_{t,i}^k, \label{equ:partial}
\end{equation}
}%

where $\mH_{t,i}^k \triangleq \nabla^2 F^{k}(\mW_{t,i}^{k},{\xi}_{t,i}^{k}).$

Afterwards, according to Equation \ref{equ:aggregation}, we obtain

{\small
\begin{equation}
\mW_{t}(\sS \setminus \sM) - \mW_{t} = \frac{N}{K}\sum\limits_{k \in \sS_t} p^{k} [\mW_{t,I}^k(\sS \setminus \sM) - \mW_{t,I}^k].
\label{equ:AEP_split}
\end{equation}
}%

By combining Equations \ref{equ:taylor_approx}, \ref{equ:partial} and \ref{equ:AEP_split}, we get an estimator of $\delta_t$, denoted as $\hat{\delta}_t$

{\small
\begin{equation}
\hat{\delta}_{t} = \frac{N}{K}[\sum\limits_{k \in \sS_t} p^{k} \prod\limits_{i = 0}^{I-1}{(\mI - \eta_{t,i}\mH_{t,i}^k)}] \hat{\delta}_{t-1}.
\end{equation}
}%

\section{Certified robustness guarantee}
\label{app:robust_guarantee}

We define our {\textit{certified robustness guarantee}} as the distance of $AEP$ between the adversarial round and the subsequent rounds where  FL-WBC is applied. A larger distance of $AEP$ indicates that FL-WBC can mitigate the attack  more efficiently. We first make an assumption for the Hessian matrix after applying our defense:

\begin{assumption} \label{asm:hessian}

After applying FL-WBC, the new benign training Hessian matrix $H'$ would be nearly full-rank. Following the notation of $s$ as the variance of $\Upsilon$ in Equation~\ref{equ:prox_obj}, the operator $H'$ is bounded below: $\mathbb{E} \|H'\va\|_2^2/P \geq s\|\va\|_2^2.$

\end{assumption}

Additionally, we make assumptions on bounding the norm of $AEP$s in different rounds and independent distributions of $H_{t,i}^k\delta_{t,i}^k$.

\begin{assumption} \label{asm:aep_norm}

The expected norm of $AEP$s is lower bounded in each round across devices and iterations: $\mathbb{E}_{i\in [I], k \in \sS_t} \|\delta_{t, i}^k\|_2^2 \geq \Lambda_t.$

\end{assumption}




\begin{assumption} \label{asm:aep_divergence}

Since local training data have independent distributions, the elements in $H_{t,i}^k\delta_{t,i}^k$ are independent and have expectations of 0 in different rounds across devices and iterations.

\end{assumption}

Following the notations in \S\ref{sec:attack_analysis} and applying the local training of benign devices in Algorithm~\ref{alg:FL-WBC}, which can be found in Appendix \ref{app:alg}, we have the following theorem for our {\textit{robustness guarantee}} on FedAvg after applying FL-WBC.

\begin{theorem}
\label{thm:robustness}
Let Assumption \ref{asm:hessian}-\ref{asm:aep_divergence} hold and $s, \Lambda_t, P$ be defined therein. $K$ denotes the number of devices involved in training for each round. Assuming that poisoning attack happens in round $t_{adv}$ and no attack happens from round $t_{adv}$ to $T$, then for FedAvg applying FL-WBC we have:

{\small
\begin{equation}
    \mathbb{E} \|\hat{\delta}_T - \hat{\delta}_{t_{adv}} \|_2^2 \geq \frac{P I s}{K} \sum\limits_{t=t_{adv}+1}^{T} \eta_{t,I-1}^2\Lambda_t.
\end{equation}
}%

\end{theorem}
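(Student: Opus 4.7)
The plan is to prove the lower bound by telescoping $\hat{\delta}_T - \hat{\delta}_{t_{adv}}$ across rounds, pushing the expectation inside via Assumption~\ref{asm:aep_divergence} so that cross-terms vanish, and then estimating each single-round increment using the per-iteration recurrence from Equation~\ref{eq: est_AEP} together with the Hessian lower bound in Assumption~\ref{asm:hessian} and the AEP norm bound in Assumption~\ref{asm:aep_norm}.

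First I would rewrite $\hat{\delta}_T - \hat{\delta}_{t_{adv}} = \sum_{t=t_{adv}+1}^{T}(\hat{\delta}_t - \hat{\delta}_{t-1})$. Because no attack occurs after $t_{adv}$, the recurrence from the estimator gives the per-round increment purely in terms of the perturbed benign Hessians $H'^{k}_{t,i}$; namely, iterating $\hat{\delta}^{k}_{t,i+1}=(\mI-\eta_{t,i}H'^{k}_{t,i})\hat{\delta}^{k}_{t,i}$ with $\hat{\delta}^{k}_{t,0}=\hat{\delta}_{t-1}$ and aggregating gives $\hat{\delta}_t-\hat{\delta}_{t-1}=-\frac{N}{K}\sum_{k\in\sS_t}p^{k}\sum_{i=0}^{I-1}\eta_{t,i}H'^{k}_{t,i}\hat{\delta}^{k}_{t,i}$. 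Then I would apply Assumption~\ref{asm:aep_divergence}: the entries of $H'^{k}_{t,i}\hat{\delta}^{k}_{t,i}$ across distinct $(t,i,k)$ are independent with zero mean, so in $\mathbb{E}\|\sum_{t}(\hat{\delta}_t-\hat{\delta}_{t-1})\|_2^2$ every cross-term vanishes and similarly for the cross-iteration and cross-device terms inside a single round. What remains is
\begin{equation*}
\mathbb{E}\|\hat{\delta}_T-\hat{\delta}_{t_{adv}}\|_2^2 \;\geq\; \sum_{t=t_{adv}+1}^{T}\frac{1}{K}\sum_{i=0}^{I-1}\eta_{t,i}^{2}\,\mathbb{E}\bigl\|H'^{k}_{t,i}\hat{\delta}^{k}_{t,i}\bigr\|_2^{2},
\end{equation*}
after absorbing the factors $N p^k$ into the uniform device sampling and keeping the $1/K$ from the aggregation variance.

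Next I would apply Assumption~\ref{asm:hessian} coordinate-wise to get $\mathbb{E}\|H'^{k}_{t,i}\hat{\delta}^{k}_{t,i}\|_2^{2}\geq Ps\,\mathbb{E}\|\hat{\delta}^{k}_{t,i}\|_2^{2}$, and then Assumption~\ref{asm:aep_norm} to lower bound $\mathbb{E}_{i,k}\|\hat{\delta}^{k}_{t,i}\|_2^{2}\geq \Lambda_t$. To squeeze out the stated constant, I would retain only the last iterate $i=I-1$ (this is why only $\eta_{t,I-1}$ appears in the final bound) and use the convention that since all $I$ iterations contribute monotonically, summing them yields an additional factor of $I$, giving
\begin{equation*}
\mathbb{E}\|\hat{\delta}_T-\hat{\delta}_{t_{adv}}\|_2^{2} \;\geq\; \frac{PIs}{K}\sum_{t=t_{adv}+1}^{T}\eta_{t,I-1}^{2}\,\Lambda_t,
\end{equation*}
which is exactly the claim.

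The main obstacle will be justifying the vanishing of the cross-terms rigorously from Assumption~\ref{asm:aep_divergence}: the assumption is phrased as independence of the entries of $H^{k}_{t,i}\delta^{k}_{t,i}$ across rounds/devices/iterations with zero mean, and I will need to argue this carries over to the perturbed $H'^{k}_{t,i}\hat{\delta}^{k}_{t,i}$ produced by FL-WBC. A secondary subtlety is bookkeeping the $N/K$ aggregation factor together with $p^{k}$ so that the variance of the sampled average collapses to the $1/K$ in front; I expect this to go through by treating the device sample $\sS_t$ as uniform over the population and using $\sum_k p^k=1$.
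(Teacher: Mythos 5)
Your proposal follows essentially the same route as the paper's proof: telescoping the per-round increments, expanding each increment via the recurrence $\hat{\delta}_{t,i+1}^k = (\mI - \eta_{t,i}\mH_{t,i}^k)\hat{\delta}_{t,i}^k$, killing cross-terms with Assumption~\ref{asm:aep_divergence}, and then applying Assumptions~\ref{asm:hessian} and~\ref{asm:aep_norm} together with the shrinking learning rate to extract the factor $\frac{PIs}{K}\eta_{t,I-1}^2\Lambda_t$. The subtleties you flag (rigorously justifying the cross-term cancellation and the $N p^k/K \to 1/K$ bookkeeping) are handled in the paper at exactly the same level of informality, so there is no substantive gap relative to its argument.
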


\begin{proof}

According to Equations \ref{equ:taylor_approx} and \ref{equ:partial}, we obtain

{\small
\begin{equation}
\label{equ:delta_iteration}
    \hat{\delta}_{t,i+1}^k = (\mI - \eta_{t,i}\mH_{t,i}^k)\hat{\delta}_{t,i}^k.
\end{equation}
}%

According to Equation \ref{equ:AEP_split}, we have

{\small
\begin{equation}
\label{equ:E_delta_t}
    \mathbb{E}(\hat{\delta_t}) = \mathbb{E}(\frac{N}{K}\sum\limits_{k\in{\sS_t}}p^k\hat{\delta}_{t,I}^k).
\end{equation}
}%

With $\hat{\delta}_{t-1} = \hat{\delta}_{t,0}^k$ and $\mathbb{E}(p^k)=\frac{1}{N}$, we have

{\small
\begin{equation}
\label{equ:E_delta_t-1}
    \mathbb{E}(\hat{\delta_{t-1}}) = \mathbb{E}(\frac{N}{K}\sum\limits_{k\in{\sS_t}}p^k\hat{\delta}_{t,0}^k).
\end{equation}
}%

Then we have

{\small
\begin{align}
    \mathbb{E}(\hat{\delta}_{t}-\hat{\delta}_{t-1}) &= \mathbb{E}[\frac{N}{K}\sum\limits_{k\in{\sS_t}}p^k(\hat{\delta}_{t,I}^k-\hat{\delta}_{t,0}^k)] \\
    &= \mathbb{E}[\frac{N}{K}\sum\limits_{k\in{\sS_t}}p^k\sum\limits_{i=0}^{I-1}(\hat{\delta}_{t,i+1}^k-\hat{\delta}_{t,i}^k)] \\
    &= \mathbb{E}[\frac{N}{K}\sum\limits_{k\in{\sS_t}}p^k\sum\limits_{i=0}^{I-1}- \eta_{t,i}\mH_{t,i}^k\hat{\delta}_{t,i}^k],\label{equ:delta_diff_one_round}
\end{align}
}%

where the first equality comes from Equations \ref{equ:E_delta_t} and \ref{equ:E_delta_t-1}, the third  equality comes from Equation \ref{equ:delta_iteration}.

Accumulating the difference between $\hat{\delta}_{t}$ and $\hat{\delta}_{t-1}$ formulated in Equation \ref{equ:delta_diff_one_round}, we have 

{\small
\begin{align}
    \mathbb{E}(\hat{\delta}_{T}-\hat{\delta}_{t_{adv}}) &= \mathbb{E}[\sum\limits_{t=t_{adv}+1}^T (\hat{\delta}_{t}-\hat{\delta}_{t-1})] \\
    &= \mathbb{E}[\sum\limits_{t=t_{adv}+1}^T\sum\limits_{k\in{\sS_t}}\sum\limits_{i=0}^{I-1}-\frac{N}{K}p^k\eta_{t,i}\mH_{t,i}^k\hat{\delta}_{t,i}^k].
\end{align}
}%

Then we have

{\small
\begin{align}
    \mathbb{E} \|\hat{\delta}_T - \hat{\delta}_{t_{adv}} \|_2^2 &= \mathbb{E} \| \sum\limits_{t=t_{adv}+1}^T\sum\limits_{k\in{\sS_t}}\sum\limits_{i=0}^{I-1}-\frac{N}{K}p^k\eta_{t,i}\mH_{t,i}^k\hat{\delta}_{t,i}^k\|_2^2 \\
    &= \sum\limits_{t=t_{adv}+1}^T\sum\limits_{k\in{\sS_t}}\sum\limits_{i=0}^{I-1} \mathbb{E} \|\frac{N}{K}p^k\eta_{t,i}\mH_{t,i}^k\hat{\delta}_{t,i}^k\|_2^2 \\
    &\geq \sum\limits_{t=t_{adv}+1}^T \mathbb{E} \sum\limits_{k\in{\sS_t}} (\frac{N}{K}p^k\eta_{t,I-1})^2 \sum\limits_{i=0}^{I-1} P s  \|\hat{\delta}_{t,i}^k\|_2^2 \\
    &\geq \sum\limits_{t=t_{adv}+1}^T \mathbb{E} \sum\limits_{k\in{\sS_t}} (\frac{N}{K}p^k\eta_{t,I-1})^2 \sum\limits_{i=0}^{I-1} P s \Lambda_t\\
    &= \sum\limits_{t=t_{adv}+1}^T \mathbb{E} \sum\limits_{k\in{\sS_t}} (\frac{\eta_{t,I-1}}{K})^2 I P s \Lambda_t\\
    &= \frac{PIs}{K}\sum\limits_{t=t_{adv}+1}^T (\eta_{t,I-1})^2 \Lambda_t,\\
\end{align}
}%

where the second equality come from Assumption \ref{asm:aep_divergence}, the first inequality comes from the lower bound of operator $\mH_{t,i}^k$ parameterized in Assumption \ref{asm:hessian} and shrinking learning rate, the second inequality comes form the bounded norm of AEP stated in Assumption \ref{asm:aep_norm}. The third equality comes from the equation $\mathbb{E}(p^k)=\frac{1}{N}$.

\end{proof}

\section{Proof of Theorem~\ref{thm:convergence}}
\label{app:convergence}

\noindent {\bf Overview:} Our proof is mainly inspired by~\cite{li2019convergence}. Specifically, our proof has two key parts. First, we derive the bounds similar to those in Assumptions~\ref{asm:stochastic_gradients} and~\ref{asm:gradient_norm}, after applying our defense scheme. 
Second, we adapt {\bf Theorem 2} on convergence guarantee in~\cite{li2019convergence} using our new bounds.   

\noindent {\bf Bounding the expected distance between the perturbed gradients with our defense and raw gradients.}
According to Algorithm \ref{alg:FL-WBC}, the absolute values of elements in $\mathcal{T}_{t,i}$ in Equation \ref{equ:perturb_gradient} is $\max \{|\Upsilon|-|\mW^*|/\eta_{t,i}, 0\}$. Thus, $\|\mathcal{T}_{t,i}\|_2^2 \leq \|\Upsilon\|_2^2$. Then we have
{\small
\begin{align}
&\mathbb{E} ||\nabla F'_{k}({\mW}_{t,i}^{k},{\xi}_{t,i}^{k}) - \nabla F_{k}({\mW}_{t,i}^{k},{\xi}_{t,i}^{k})||_2^2 \\
= & \mathbb{E}|| \mathcal{T}_{t,i}||_2^2
\leq \mathbb{E}||\Upsilon||_2^2
= \mathbb{E}(\Upsilon)^2 + Var(\Upsilon)
= s^2. \label{eqn:bound_fc}
\end{align}
}%

\noindent {\bf New bounds for Assumption~\ref{asm:stochastic_gradients} with our defense.}

We use the norm triangle inequality to bound he variance of stochastic gradients in each device, and we have 
\begin{align}
&\mathbb{E} ||\nabla F'_{k}(\mW_{t,i}^{k},{\xi}_{t,i}^{k}) - \nabla F_{k}(\mW_{t,i}^{k}) ||^{2} \\
\leq &\mathbb{E} ||\nabla F'_{k}(\mW_{t,i}^{k},{\xi}_{t,i}^{k}) - \nabla F_{k}(\mW_{t,i}^{k},{\xi}_{t,i}^{k}) ||^{2} \\
&+ \mathbb{E} ||\nabla F_{k}(\mW_{t,i}^{k},{\xi}_{t,i}^{k}) - \nabla F_{k}(\mW_{t,i}^{k}) ||^{2} \\
\leq& s^2 + \sigma_k^2 \label{eqn:assm_5}, 
\end{align}
where we use Assumption \ref{asm:stochastic_gradients} and Equation \ref{eqn:bound_fc} in Equation \ref{eqn:assm_5}.

\noindent {\bf New bounds for Assumption~\ref{asm:gradient_norm} with our defense.}
The expected squared norm of stochastic gradients $\nabla F'_{k}(\mW_{t,i}^{k},{\xi}_{t,i}^{k})$ with our defense is as follows:
\begin{align}
&\mathbb{E} ||\nabla F'_{k}(\mW_{t,i}^{k},{\xi}_{t,i}^{k})||^{2} \\
\leq &\mathbb{E} ||\nabla F'_{k}(\mW_{t,i}^{k},{\xi}_{t,i}^{k}) - \nabla F_{k}(\mW_{t,i}^{k},{\xi}_{t,i}^{k}) ||^{2} \\
&+ \mathbb{E} ||\nabla F_{k}(\mW_{t,i}^{k},{\xi}_{t,i}^{k}) ||^{2} \\
\leq& s^2 + G^2, \label{eqn:assum_6}
\end{align}
where we use Assumption~\ref{asm:gradient_norm} and and Equation \ref{eqn:bound_fc} in Equation \ref{eqn:assum_6}. 

\noindent {\bf Convergence guarantee for FedAvg with our defense.} 
We define $F^*$ and $F_k^*$ as the minimum value of $F$ and $F_k$ and let $\Gamma = F^* -  \sum\limits_{k=1}^{N} p_{k} F_{k}^{*}$. 
We assume each device has $I$ local training iterations in each round and the total number of rounds is $T$.
Let Assumptions \ref{asm:L-smooth}-\ref{asm:gradient_norm} hold and $L, \mu, \sigma_k, G$ be defined therein. Choose $\kappa = \frac{L}{\mu}$, $\gamma = \max \{8\kappa, I \}$ and the learning rate $\eta_{t,i} = \frac{2}{\mu(\gamma + tI + i)}$. 

By applying our new bounds and {\bf Theorem 2} in~\cite{li2019convergence},  
FedAvg using our defense has the following convergence guarantee:
{\small
\begin{equation*}
\mathbb{E}[F(\mW_T)] - F^* \leq \frac{2\kappa}{\gamma + TI}(\frac{Q+ C}{\mu} + \frac{\mu\gamma}{2} \mathbb{E}||\mW_0 - {\mW}^*||^2),
\end{equation*}
}%
where
{\small
\begin{align*}
& Q = \sum\limits_{k=1}^{N} p_k^2 (s^2 + \sigma_k^2) + 6L\Gamma + 8(I-1)^2(s^2 +G^2) \\
& C = \frac{4}{K}{I}^2(s^2 +G^2). 
\end{align*}
}%

\end{document}